\definecolor{ggblue}{RGB}{36, 100, 137} 
\definecolor{ggred}{RGB}{226, 74, 51}
\definecolor{ggpurple}{RGB}{95,87,145}
\theoremstyle{plain}
\newtheorem{theorem}{Theorem}[section]
\newtheorem{proposition}[theorem]{Proposition}
\newtheorem{lemma}[theorem]{Lemma}
\theoremstyle{definition}
\theoremstyle{remark}
\title{On the Unreasonable Effectiveness of \\Feature Propagation in Learning on Graphs \\with Missing Node Features}
\author{Emanuele Rossi  \\
Twitter \& Imperial College London\\
\texttt{erossi@twitter.com} \\
\And
Henry Kenlay \thanks{Work done while interning at Twitter.} \\
University of Oxford \\
\And
Maria I. Gorinova \\
Twitter \\
\And
Benjamin Paul Chamberlain \\
Twitter \\
\And
Xiaowen Dong \\
University of Oxford \\
\And
Michael Bronstein \\
Twitter \& Imperial College London \\
}
\begin{document}

\maketitle

\begin{abstract}
While Graph Neural Networks (GNNs) have recently become the {\em de facto} standard for modeling relational data, they impose a strong assumption on the availability of the node or edge features of the graph.  In many real-world applications, however, features are only partially available; for example, in social networks, age and gender are available only for a small subset of users.
We present a general approach for handling missing features in graph machine learning applications that is based on minimization of the Dirichlet energy and leads to a diffusion-type differential equation on the graph. The discretization of this equation produces a simple, fast and scalable algorithm which we call Feature Propagation.
We experimentally show that the proposed approach outperforms previous methods on seven common node-classification benchmarks and can withstand surprisingly high rates of missing features: on average we observe only around 4\% relative accuracy drop when 99\% of the features are missing. Moreover, it takes only 10 seconds to run on a graph with $\sim$2.5M nodes and $\sim$123M edges on a single GPU.
\end{abstract}

\section{Introduction}
Graph Neural Networks (GNNs)~\citep{gori2005new,scarselli2008graph,Kipf:2016tc,gilmer2017neural,velickovic2018graph,7974879} have been successful on a broad range of problems and in a variety of fields~\citep{10.5555/2969442.2969488, 10.1145/3219819.3219890, 10.1093/bioinformatics/bty294, Gainza606202,sanchez2020learning,shlomi2020graph,derrowpinion2021traffic}. 
GNNs typically operate by a message-passing mechanism~\citep{47094, pmlr-v70-gilmer17a}, where at each layer, nodes send their feature representations (``messages'') to their neighbors. The feature representation of each node is initialized to their original features, and is updated by repeatedly aggregating incoming messages from neighbors. Being able to combine the topological information with feature information is what distinguishes GNNs from other purely topological learning approaches such as random walks~\citep{perozzi2014deepwalk, grover2016node2vec} or label propagation~\citep{zhu2002learning}, and arguably what leads to their success.

GNN models typically assume a fully observed feature matrix, where rows represent nodes and columns feature channels.  However, in real-world scenarios, each feature is often only observed for a subset of the nodes. For example, demographic information can be available for only a small subset of social network users, while content features are generally only present for the most active users. In a co-purchase network, not all products may have a full description associated with them. With the rising awareness around digital privacy, data is increasingly available only upon explicit user consent. In all the above cases, the feature matrix contains missing values and most existing GNN models cannot be directly applied.

While classic imputation methods~\citep{8611131, pmlr-v80-yoon18a, kingma2013autoencoding} can be used to fill the missing values of the feature matrix, they are unaware of the underlying graph structure. Graph Signal Processing, a field attempting to generalize classical Fourier analysis to graphs, offers several methods that reconstruct signals on graphs~\citep{6638704}. However, they do not scale beyond graphs with a few thousand nodes, making them infeasible for practical applications. More recently, SAT~\citep{sat}, GCNMF~\citep{taguchi2021gcnmf} and PaGNN~\citep{jiang2021incomplete} have been proposed to adapt GNNs to the case of missing features. However, they are not evaluated at high missing features rates ($>90\%$), which occur in many real-world scenarios, and where we find them to suffer. Moreover, they are unable to scale to graphs with more than a few hundred thousand nodes. At the time of writing, PaGNN is the state-of-the-art method for node classification with missing features.

\paragraph{Contributions}
We present a general approach for handling missing node features in graph machine learning tasks. The framework consists of an initial diffusion-based feature reconstruction step followed by a downstream GNN. The reconstruction step is based on Dirichlet energy minimization, which leads to a diffusion-type differential equation on the graph. Discretization of this differential equation leads to a very simple, fast, and scalable iterative algorithm which we call Feature Propagation (FP). FP outperforms state-of-the-art methods on six standard node-classification benchmarks and presents the following advantages:\\
\begin{itemize}[leftmargin=*]
    \vspace{-0.5cm}
    \itemsep0.1em
    \item 
    \textbf{Theoretically Motivated}: FP emerges naturally as the gradient flow minimizing the Dirichlet energy and can be interpreted as a diffusion equation on the graph with known features used as boundary conditions. This contributes to the promising direction of building continuous-time models on graphs.
    \item 
    \textbf{Robust to high rates of missing features}: FP can withstand surprisingly high rates of missing features. In our experiment, we observe on average around 4\% relative accuracy drop when up to 99\% of the features are missing. In comparison, GCNMF and PaGNN have an average drop of 53.33\% and 21.25\% respectively. This finding has important implications especially in scenarios where the cost of sampling (observing features on nodes) is high or sampling is not possible altogether.
    \item 
    \textbf{Generic}: FP can be combined with any GNN model to solve the downstream task; in contrast, GCNMF and PaGNN are specific GCN-type models.
    \item 
    \textbf{Fast and Scalable}: FP takes only around 10 seconds for the reconstruction step on OGBN-Products (a graph with $\sim$2.5M nodes and  $\sim$123M edges) on a single GPU. GCNMF and PaGNN run out-of-memory on this dataset.
\end{itemize}

\begin{figure*}
    \centering
    \includegraphics[width=0.95\textwidth]{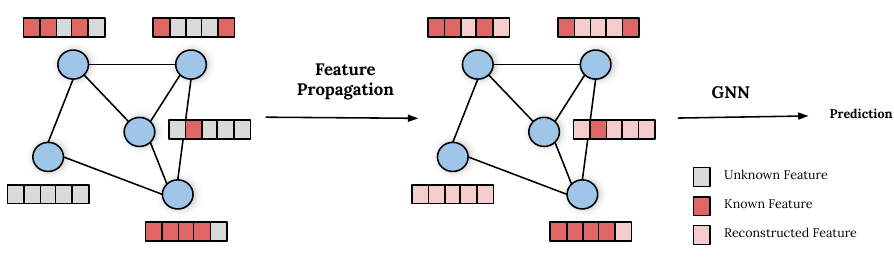}
    \caption{A diagram illustrating our Feature Propagation framework. On the left, a graph with missing node features. In the initial reconstruction step, Feature Propagation reconstructs the missing features by iteratively diffusing the known features in the graph. Subsequently, the graph and the reconstructed node features are fed into a downstream GNN model, which then produces a prediction.}
    \label{fig:fp_diagram}
\end{figure*}

\section{Preliminaries}
Let $G=(V,E)$ be an undirected graph with $n\times n$ adjacency matrix $\mathbf{A}$ and a node feature vector\footnote{For convenience, we assume scalar node features. Our derivations apply straightforwardly to the case of $d$-dimensional features represented as an $n \times d$ matrix $\mathbf{X}$.}  $\mathbf{x}\in \mathbb{R}^{n}$. The {\em graph Laplacian} is an $n\times n$ positive semi-definite matrix $\boldsymbol{\Delta} = \mathbf{I} - \tilde{\mathbf{A}}$, where $\tilde{\mathbf{A}}= \mathbf{D}^{-\frac{1}{2}} \mathbf{A} \mathbf{D}^{-\frac{1}{2}}$ is the normalized adjacency matrix and $\mathbf{D}=\mathrm{diag}(\sum_j a_{1j}, \hdots, \sum_j a_{nj})$ is the diagonal degree matrix.

Denote by $V_k \subseteq V$ the set of nodes on which the features are {\em known}, and by $V_u = V_k^c = V \setminus V_k$ the {\em unknown} ones. We further assume the ordering of the nodes such that we can write 
$$
\mathbf{x} = \left[ \begin{matrix}
 \mathbf{x}_k\\
\mathbf{x}_u
\end{matrix} 
\right]
\quad
\mathbf{A} = \left[ \begin{matrix}
 \mathbf{A}_{kk} & \mathbf{A}_{ku}\\
\mathbf{A}_{uk} & \mathbf{A}_{uu}
\end{matrix}
\right]
\quad
\boldsymbol{\Delta} = \left[ \begin{matrix}
 \boldsymbol{\Delta}_{kk} & \boldsymbol{\Delta}_{ku}\\
\boldsymbol{\Delta}_{uk} & \boldsymbol{\Delta}_{uu}
\end{matrix}
\right].
$$
Because the graph is undirected, $\mathbf{A}$ is symmetric and thus $\mathbf{A}_{ku}^\top = \mathbf{A}_{uk}$ and $\boldsymbol{\Delta}_{ku}^\top = \boldsymbol{\Delta}_{uk}$. 
We will tacitly assume this in the following discussion.

\paragraph*{Graph feature interpolation} is the problem of reconstructing the unknown features ${\mathbf{x}}_u$ given the graph structure $G$ and the known features $\mathbf{x}_k$.  
The interpolation task requires some prior on the behavior of the features of the graph, which can be expressed in the form of an energy function $\ell(\mathbf{x},G)$. The most common assumption is feature {\em homophily} (i.e., that the features of every node are similar to those of the neighbours), quantified using a criterion of {\em smoothness} such as the Dirichlet energy. %
Since in many cases the behavior of the features is not known, the energy can possibly be learned from the data. 

\paragraph*{Learning on a graph with missing features} is a transductive learning problem (typically node-wise classification or regression using some GNN architecture) where the structure of the graph $G$ is known while the  labels and node features are only partially known on the subsets $V_l$ and $V_k$ of nodes, respectively (that might be different and even disjoint). Specifically, we try to learn a function $\mathbf{f}(\mathbf{x}_k, G)$ such that $f_i \approx y_i$ for $i \in V_l$. Learning with missing features can be done by a pre-processing step of graph signal interpolation (reconstructing an estimate $\tilde{\mathbf{x}}$ of the full feature vector $\mathbf{x}$ from $\mathbf{x}_k$) independent of the learning task, followed by the learning task of $\mathbf{f}(\tilde{ \mathbf{x}}, G)$ on the inferred fully-featured graph. In some settings, we are not interested in recovering the features {\em per se}, but rather ensuring that the output of the {\em function $\mathbf{f}$} on these features is correct -- arguably a more `forgiving' setting. 

\section{Feature Propagation} \label{sec:feature_propagation}

We assume to be given $\mathbf{x}_k$ and attempt to find the missing node features $\mathbf{x}_u$ by means of interpolation that minimizes some energy $\ell(\mathbf{x},G)$. In particular, we consider the {\em Dirichlet energy} $\ell(\mathbf{x},G) = \tfrac{1}{2}\mathbf{x}^\top \boldsymbol{\Delta}\mathbf{x} = \tfrac{1}{2}\sum_{ij} \tilde{a}_{ij} ({x}_i - {x}_j)^2$, where $\tilde{a}_{ij}$ are the individual entries of the normalized adjacency $\tilde{\mathbf{A}}$. The Dirichlet energy is widely used as a smoothness criterion for functions defined on the nodes of the graph and thus promotes homophily. 
Functions minimizing the Dirichlet energy are called {\em harmonic}; without boundary conditions, it is minimized by a constant function. 

While the Dirichlet energy is convex and it is possible to derive its minimizer in a closed-form, as shown in Appendix \ref{sec:dirichlet_closed_form}, its computational complexity makes it unfeasible for graphs with many nodes with missing features.  
Instead, we consider the associated {\em gradient flow}  $\dot{\mathbf{x}}(t) = - \nabla \ell(\mathbf{x}(t))$ as a differential equation with boundary condition $\mathbf{x}_k(t) = \mathbf{x}_k$ whose solution at the missing nodes, ${\mathbf{x}}_u = \lim_{t\rightarrow \infty} \mathbf{x}_u(t)$, provides the desired interpolation. 

\paragraph{Gradient flow.}
For the Dirichlet energy, $\nabla_{\mathbf{x}} \ell = \boldsymbol{\Delta}\mathbf{x}$ and the gradient flow takes the form of the standard isotropic heat diffusion equation on the graph, 
$$
\dot{\mathbf{x}}(t) = -\boldsymbol{\Delta}\mathbf{x}(t) 
\quad\quad \quad 
\text{(IC)}\,\,\,\mathbf{x}(0) = \left[ \begin{matrix}
 \mathbf{x}_k \\
\mathbf{x}_{u}(0)
\end{matrix} 
\right]  \quad\quad\quad 
 \text{(BC)}\,\,\,\mathbf{x}_k(t) = \mathbf{x}_k
$$

where IC and BC stand for initial conditions and boundary conditions respectively.
%
By incorporating the boundary conditions, we can compactly express the diffusion equation as 
\begin{equation}
\left[ \begin{matrix}
\dot{\mathbf{x}}_k(t)\\
\dot{\mathbf{x}}_{u}(t)
\end{matrix}
\right]
= 
-\left[ \begin{matrix}
\mathbf{0} & \mathbf{0}\\
\boldsymbol{\Delta}_{uk} & \boldsymbol{\Delta}_{uu}
\end{matrix}
\right]
\left[ \begin{matrix}
 \mathbf{x}_k\\
\mathbf{x}_{u}(t)
\end{matrix}
\right] = 
-\left[ \begin{matrix}
 \mathbf{0}\\
\boldsymbol{\Delta}_{uk} \mathbf{x}_{k} + \boldsymbol{\Delta}_{uu} \mathbf{x}_{u}(t)
\end{matrix}
\right].
\label{eq:diffeq}
\end{equation} 
As expected, the gradient flow of the observed features is $\mathbf{0}$, given that they do not change during the diffusion.

The evolution of the missing features can be regarded as a heat diffusion equation with a constant heat source $\boldsymbol{\Delta}_{uk} \mathbf{x}_k$ coming from the boundary (known) nodes. Since the graph Laplacian matrix is positive semi-definite, the Dirichlet energy $\ell$ is convex. Its global minimizer is given by the solution to the closed-form equation $\nabla_{\mathbf{x}_u} \ell = \mathbf{0}$ and by rearranging the final $|V_u|$ rows of Equation \ref{eq:diffeq} we get the solution $\mathbf{x}_u = -\mathbf{\Delta}_{uu}^{-1} \mathbf{\Delta}^\top_{ku} \mathbf{x}_k$. This solution always exists as $\mathbf{\Delta}_{uu}$ is non-singular, by virtue of the following:  

\begin{proposition}[The sub-Laplacian matrix of an undirected connected graph is invertible]
\label{prop:laplacianinvertable}
Take any undirected, connected graph with adjacency matrix $\mathbf{A} \in \{0, 1\}^{n \times n}$, and its Laplacian $\mathbf{\Delta} = \mathbf{I} - \mathbf{D}^{-1/2} \mathbf{A} \mathbf{D}^{-1/2}$, with $\mathbf{D}$ being the degree matrix of $\mathbf{A}$. Then, for any principle sub-matrix $\mathbf{L}_u \in \mathbb{R}^{b \times b}$ of the Laplacian, where $1 \leq b < n$, $\mathbf{L}_u$ is invertible. 
\end{proposition}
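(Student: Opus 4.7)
The plan is to prove that $\mathbf{L}_u$ is in fact positive definite, which immediately yields invertibility. Since $\mathbf{L}_u$ is a principal submatrix of the normalized Laplacian $\boldsymbol{\Delta}$, which is known to be symmetric positive semi-definite, $\mathbf{L}_u$ inherits positive semi-definiteness (for any $\mathbf{y}\in\mathbb{R}^b$, pad it with zeros to obtain $\mathbf{x}\in\mathbb{R}^n$, then $\mathbf{y}^\top \mathbf{L}_u \mathbf{y} = \mathbf{x}^\top \boldsymbol{\Delta}\mathbf{x} \geq 0$). So it suffices to rule out a nonzero vector $\mathbf{y}$ in the kernel of $\mathbf{L}_u$.

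First I would recall (or quickly derive) the standard fact that for an undirected \emph{connected} graph with $n\geq 2$ vertices, every vertex has degree at least $1$, and the kernel of the normalized Laplacian $\boldsymbol{\Delta} = \mathbf{I} - \mathbf{D}^{-1/2}\mathbf{A}\mathbf{D}^{-1/2}$ is one-dimensional and spanned by $\mathbf{v} = \mathbf{D}^{1/2}\mathbf{1}$, whose entries $\sqrt{d_i}$ are all strictly positive. This is obtained from the unnormalized Laplacian $\mathbf{L} = \mathbf{D}-\mathbf{A}$, whose kernel is spanned by $\mathbf{1}$ for a connected graph, combined with the conjugation $\boldsymbol{\Delta} = \mathbf{D}^{-1/2}\mathbf{L}\mathbf{D}^{-1/2}$.

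Next, suppose $\mathbf{y}\in\mathbb{R}^b$ satisfies $\mathbf{L}_u \mathbf{y}=\mathbf{0}$, and form the zero-padded extension $\mathbf{x}\in\mathbb{R}^n$ which equals $\mathbf{y}$ on the index set $S$ defining the submatrix and vanishes on $V\setminus S$. Then $\mathbf{x}^\top \boldsymbol{\Delta}\mathbf{x} = \mathbf{y}^\top \mathbf{L}_u \mathbf{y}=0$, and since $\boldsymbol{\Delta}$ is PSD this forces $\boldsymbol{\Delta}\mathbf{x}=\mathbf{0}$, i.e.\ $\mathbf{x}\in\ker(\boldsymbol{\Delta})$. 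By the previous step $\mathbf{x}=c\,\mathbf{D}^{1/2}\mathbf{1}$ for some scalar $c$. Since $b<n$, there is at least one index $i\in V\setminus S$ where $\mathbf{x}_i=0$, and at that index $c\sqrt{d_i}=0$ with $\sqrt{d_i}>0$ forces $c=0$. Hence $\mathbf{x}=\mathbf{0}$ and therefore $\mathbf{y}=\mathbf{0}$, giving positive definiteness of $\mathbf{L}_u$ and thus invertibility.

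The potential obstacle is not the algebra but making sure the kernel characterization is applied correctly: it relies on both connectedness (to get one-dimensionality) \emph{and} on every vertex having positive degree (to guarantee the spanning vector has no zero entries). The assumption $b<n$ together with $b\geq 1$ means $n\geq 2$, which combined with connectedness ensures no isolated vertices, so $\mathbf{D}^{1/2}$ is well-defined and strictly positive on the diagonal; this is precisely what lets the zero entry of $\mathbf{x}$ outside $S$ kill the scalar $c$. I would flag this edge-case explicitly, since without it the argument would fail.
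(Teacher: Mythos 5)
Your proof is correct, and it takes a genuinely different route from the paper's. The paper first establishes a spectral-radius lemma, $\rho(\tilde{\mathbf{A}}_{uu})<1$, by a Perron--Frobenius argument: the zero-padded block $\tilde{\mathbf{A}}_{uu}$ is elementwise dominated by, but not equal to, the irreducible nonnegative matrix $\tilde{\mathbf{A}}$, which forces a strict drop in spectral radius; invertibility of $\boldsymbol{\Delta}_{uu}=\mathbf{I}-\tilde{\mathbf{A}}_{uu}$ then follows because $1$ cannot be an eigenvalue of $\tilde{\mathbf{A}}_{uu}$. You instead prove the stronger statement that $\mathbf{L}_u$ is positive definite, using only the PSD quadratic-form identity $\mathbf{y}^\top\mathbf{L}_u\mathbf{y}=\mathbf{x}^\top\boldsymbol{\Delta}\mathbf{x}$ for the zero-padded vector, the implication $\mathbf{x}^\top\boldsymbol{\Delta}\mathbf{x}=0\Rightarrow\boldsymbol{\Delta}\mathbf{x}=\mathbf{0}$, and the fact that for a connected graph $\ker(\boldsymbol{\Delta})=\mathrm{span}(\mathbf{D}^{1/2}\mathbf{1})$ has no zero entries; your explicit handling of the no-isolated-vertices edge case is exactly the right thing to flag. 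Your argument is more elementary and self-contained (no appeal to irreducibility or comparison theorems for nonnegative matrices), and it yields positive definiteness rather than mere invertibility. What the paper's route buys in exchange is the spectral-radius bound itself: Lemma~\ref{lemma:spectralradius} is reused in Proposition~\ref{prop:iterativesolution} to show $\tilde{\mathbf{A}}_{uu}^n\to\mathbf{0}$ and that the Neumann series converges, and your positive-definiteness of $\mathbf{I}-\tilde{\mathbf{A}}_{uu}$ only bounds the eigenvalues of the symmetric matrix $\tilde{\mathbf{A}}_{uu}$ above by $1$; controlling $\rho(\tilde{\mathbf{A}}_{uu})$ would additionally require ruling out eigenvalues $\le -1$ (a separate argument involving $\mathbf{I}+\tilde{\mathbf{A}}$ and bipartiteness). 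So for the proposition as stated your proof is complete and arguably cleaner, but it would not by itself replace the lemma the paper needs downstream.
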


Proof: See Appendix \ref{sec:dirichlet_closed_form}. Also, while the proposition assumes that the graph is connected, our analysis and method generalize straightforwardly in the case of a disconnected graph as we can simply apply Feature Propagation to each connected component independently.

However, solving a system of linear equations is computationally expensive (incurring $\mathcal{O}(|V_u|^3)$ complexity for matrix inversion) and thus intractable for anything but only small graphs.

\paragraph{Iterative scheme.}
As an alternative, we can discretize the diffusion equation~(\ref{eq:diffeq}) and solve it by an iterative numerical scheme. Approximating the temporal derivative as forward difference
with the time variable $t$ discretized using a fixed step ($t=hk$ for step size $h>0$ and $k=1,2,\hdots$), we obtain the {\em  explicit Euler scheme:} 
\begin{eqnarray*}
\mathbf{x}^{(k+1)} = \mathbf{x}^{(k)} - h  
\left[ \begin{matrix}
\mathbf{0} & \mathbf{0}\\
\boldsymbol{\Delta}_{uk} & \boldsymbol{\Delta}_{uu}
\end{matrix}
\right]\mathbf{x}^{(k)} 
= 
\left(\mathbf{I} - \left[ \begin{matrix}
\mathbf{0} & \mathbf{0}\\
h\boldsymbol{\Delta}_{uk} &  h\boldsymbol{\Delta}_{uu}
\end{matrix} 
\right] \right)\mathbf{x}^{(k)}
= 
\left[ \begin{matrix}
\mathbf{I} & \mathbf{0}\\
-h\boldsymbol{\Delta}_{uk} & \mathbf{I} - h\boldsymbol{\Delta}_{uu}
\end{matrix}
\right]\mathbf{x}^{(k)}
\\
\end{eqnarray*} 
For the special case of $h=1$, we can use the following observation
$$
\tilde{\mathbf{A}} = \mathbf{I} - \boldsymbol{\Delta} = 
\left[ \begin{matrix}
\mathbf{I} & \mathbf{0}\\
\mathbf{0} & \mathbf{I}
\end{matrix}
\right]
-
\left[ \begin{matrix}
\boldsymbol{\Delta}_{kk} & \boldsymbol{\Delta}_{ku}\\
\boldsymbol{\Delta}_{uk} & \boldsymbol{\Delta}_{uu}
\end{matrix}
\right]
=
\left[ \begin{matrix}
\mathbf{I} - \boldsymbol{\Delta}_{kk} & -\boldsymbol{\Delta}_{ku}\\
-\boldsymbol{\Delta}_{uk} & \mathbf{I} - \boldsymbol{\Delta}_{uu}
\end{matrix}
\right],
$$
to write the iteration formula as 
\begin{equation} \label{eq:dirichlet_iteration}
    \mathbf{x}^{(k+1)} =  
    \left[ \begin{matrix}
    \mathbf{I} & \mathbf{0}\\
    \tilde{\mathbf{A}}_{uk} & \tilde{\mathbf{A}}_{uu}
    \end{matrix}
    \right]\mathbf{x}^{(k)}.
\end{equation}

\begin{figure}
    \centering
    \includegraphics[width=0.5\textwidth]{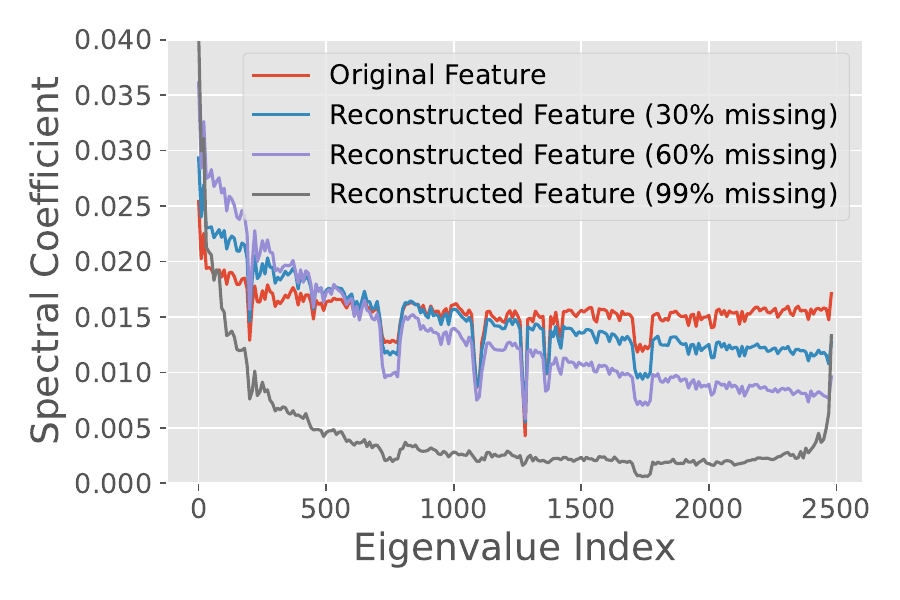}
    \caption{Graph Fourier transform magnitudes of the original Cora features (red) and those reconstructed by FP for varying rates of missing rates (we take the average over feature channels). Since FP minimizes the Dirichlet energy, it can be interpreted as a low-pass filter, which is stronger for a higher rate of missing features. 
    }
    \label{fig:spectral_reconstruction}
\end{figure}

The Euler scheme is the gradient descent of the Dirichlet energy. Thus, applying the scheme decreases the Dirichlet energy and results in the features becoming increasingly smooth. Iteration~(\ref{eq:dirichlet_iteration}) can be interpreted as successive low-pass filtering. Figure \ref{fig:spectral_reconstruction} depicts the magnitude of the graph Fourier coefficients of the original and reconstructed features on the Cora dataset, indicating that the higher the rate of missing features, the stronger the low-pass filtering effect.

The following results shows that the iterative scheme with $h=1$ always converges and its steady state is equal to the closed form solution. Importantly, the solution does not depend on the initial values $\mathbf{x}^{(0)}_u$ given to the unknown features. 

\begin{proposition}
\label{prop:iterativesolution}
Take any undirected and connected graph with adjacency matrix $\mathbf{A} \in \{0, 1\}^{n \times n}$, and normalised Adjacency $\tilde{\mathbf{A}} =  \mathbf{D}^{-1/2} \mathbf{A} \mathbf{D}^{-1/2}$, with $\mathbf{D}$ being the degree matrix of $\mathbf{A}$. Let $\mathbf{x} = \mathbf{x}^{(0)} \in \mathbf{R}^{n}$ be the initial feature vector and define the following recursive relation
\begin{eqnarray*}
\mathbf{x}^{(k)} &=& 
\left[ \begin{matrix}
 \mathbf{I} & \mathbf{0}\\
 \tilde{\mathbf{A}}_{uk} &  \tilde{\mathbf{A}}_{uu}
\end{matrix}
\right]
\mathbf{x}^{(k-1)}.
\end{eqnarray*} 
Then this recursion converges and the steady state is given to be 
\begin{equation*}
\lim_{n \rightarrow \infty} \mathbf{x}^{(n)} = 
\begin{bmatrix}
\mathbf{x}_k \\
-\mathbf{\Delta}_{kk}^{-1} \tilde{\mathbf{A}}_{uk} \mathbf{x}_k
\end{bmatrix}.
\end{equation*}
\end{proposition}
Proof: See Appendix \ref{sec:closed_form_solution}.

\begin{wrapfigure}{r}{0.5\textwidth}
\begin{minipage}{0.5\textwidth}
\begin{algorithm}[H]
\caption{Feature Propagation}
\begin{algorithmic}[1] 
\State {\bfseries Input:} feature vector $\mathbf{x}$, diffusion matrix $\tilde{\mathbf{A}}$
\State $\mathbf{y} \gets \mathbf{x}$
\While{$\mathbf{x}$ has not converged}
\State $\mathbf{x} \gets \tilde{\mathbf{A}}\mathbf{x}$   \Comment{Propagate features}
\State $\mathbf{x}_k \gets \mathbf{y}_k$                 \Comment{Reset known features}
\EndWhile
\end{algorithmic} \label{alg:fp}
\end{algorithm}
\vspace{-3mm}
\end{minipage}
\end{wrapfigure}

\paragraph{Feature Propagation Algorithm.}
We can notice that the update in Equation \ref{eq:dirichlet_iteration} is equivalent to first multiplying the feature vector $\mathbf{x}$ by the original diffusion matrix $\tilde{\mathbf{A}}$, and then resetting the known features to their true value. This gives us Algorithm \ref{alg:fp}, an extremely simple and scalable iterative algorithm to reconstruct the missing features on a graph, which we refer to as {\em Feature Propagation} (FP). While $\mathbf{x}_u$ can be initialized to any value, in practice we initialize $\mathbf{x}_u$ to zero and find 40 iterations to be enough to provide convergence for all datasets we experimented on. At each iteration, the diffusion occurs from the nodes with known features to the nodes with unknown features as well as among the nodes with unknown features. 

\paragraph{Extension to Vector-Valued Features.}
Algorithm \ref{alg:fp} extends seamlessly to vector-valued features by simply replacing the feature vector $\mathbf{x}$ with a $n \times d$ feature matrix $\mathbf{X}$, where $d$ is the number of features. Multiplying the diffusion matrix $\mathbf{A}$ by the feature matrix $\mathbf{X}$ diffuses each feature channel independently. Interestingly, it would not be trivial to extend Equation \ref{eq:dirichlet_iteration} to vector-valued features without noticing its equivalence with Algorithm \ref{alg:fp}, as each node could have different missing features, leading to different sub-matrices $\tilde{\mathbf{A}}_{uk}$ and $\tilde{\mathbf{A}}_{uu}$ for each feature channel.

\paragraph{Learning.}
One significant advantage of FP is that it can be easily combined with any graph learning model to generate predictions for the downstream task. Moreover, FP is not aimed at merely reconstructing the node features. Instead, by only reconstructing the lower frequency components of the signal, it is by design very well suited to be combined with GNNs, which are known to mainly leverage these lower frequency components~\citep{pmlr-v97-wu19e}.
Our approach is generic and can be used for any graph-related task for missing features, such as node classification, link prediction and graph classification. In this paper, we focus on node classification.

\paragraph{Oversmoothing.}
Figure \ref{fig:spectral_reconstruction} shows that the more features are missing, the smoother the reconstruction produced by FP is. Despite this, FP does not suffer from oversmoothing~\citep{oono2020graph}, a term used when node representations converge to similar values. Oversmoothing is caused by repeated diffusion and occurs widely when stacking more than a few layers of the most popular GNNs such as GCN~\citep{Kipf:2016tc}, GAT~\citep{velickovic2018graph} or SGC~\citep{pmlr-v97-wu19e}. However, the boundary conditions in the Feature Propagation diffusion equation prevent the reconstructed features from becoming overly smooth, even when using an extremely high number of diffusion steps. This has also been studied by CGNN~\citep{10.5555/3524938.3525904} and GRAND++~\citep{thorpe2022grand}, which require soft boundary conditions in the form of a source term to prevent oversmoothing, although not in the context of missing features.

\section{Related Work}
\paragraph{Label Propagation.} The proposed algorithm bears some similarity with Label Propagation~\citep{zhu2002learning} (LP), which predicts a class for each node by propagating the known labels in the graph. Differently from our setting of diffusion of continuous node features, they deal with discrete label classes directly, resulting in a different diffusion operator. However, the key difference between them lies in how they are used. Importantly, LP is used to directly perform node classification, taking into account only the graph structure and being unaware of node features. On the other hand, FP is used to reconstruct missing features, which are then fed into a downstream GNN classifier. FP allows a GNN model to effectively combine features and graph structures, even when most of the features are missing. Our experiments show that FP+GNN always outperforms LP, even in cases of extremely high rates of missing features, suggesting the effectiveness of FP. Also, the derived scheme is a special case of Neural Graph PDEs~\cite{Chamberlain2021GRANDGN}, which are in turn related to the iterative scheme presented in~\cite{Zhou2004ARF}.

\paragraph{Matrix completion.}
Several optimization-based approaches~\citep{10.1145/2184319.2184343, 4781121} as well as learning-based approaches~\citep{8611131, pmlr-v80-yoon18a, kingma2013autoencoding} have been proposed to solve the matrix completion problem. However, they are unaware of the underlying graph structure. Graph matrix completion~\citep{Kalofolias2014MatrixCO, vdberg2017graph, 10.5555/3294996.3295127, NIPS2015_f4573fc7} extends the above approaches to make use of an underlying graph. Similarly, Graph Signal Processing offers several methods to interpolate signals on graphs. \cite{6638704} prove the necessary conditions for a graph signal to be recovered perfectly, and provide a corresponding algorithm. However, due to the optimisation problems involved, most above approaches are too computationally intensive and cannot scale to graphs with more than $\sim$1,000 nodes. Moreover, the goal of all above approaches is to reconstruct the missing entries of the matrix, rather than solving a downstream task.

\paragraph{Extending GNNs to missing node features.}
SAT~\citep{sat} consists of a Transformer-like model for feature reconstruction and a GNN model to solve the downstream task. GCNMF~\citep{taguchi2021gcnmf} adapts GCN~\citep{Kipf:2016tc} to the case of missing node features by representing the missing data with a Gaussian mixture model. PaGNN~\citep{jiang2021incomplete} is a GCN-like model which uses a partial message-passing scheme to only propagate observed features. While showing a reasonable performance for low rates of missing features, these methods suffer in regimes of high rates of missing features, and do not scale to large graphs.

\paragraph{Other related GNN works.}
Several papers investigate how to augment GNNs when no node features are available~\citep{Cui2021OnPA}, as well as investigating the performance of GNNs with random features~\citep{DBLP:conf/sdm/SatoYK21, abboud2021the}. 
Dirichlet energy minimization has been widely used as a regularizer in several graph-related tasks~\citep{10.5555/3041838.3041953, Zhou2004ARF, 10.1145/1390156.1390303}. 
Discretizion of continuous diffusion on graphs has already been explored in \cite{Chamberlain2021GRANDGN} and \cite{xhonneux2019continuous}. Propagation on the graph has also been studied as a solution to the different problem of node regression on multi-relational graphs~\citep{10.1007/978-3-030-93409-5_14}. Other methods have investigated propagating node features~\citep{pmlr-v97-wu19e, gasteiger2018combining, cwdlydw2020gbp}, however not in the scenario of missing features. The boundary conditions given by the available features in FP’s diffusion equation (enforced by resetting the known feature after each iteration in the algorithm) is what makes it different from other propagation approaches and makes it an effective solution to the missing features problem. While ~\citep{pmlr-v97-wu19e, gasteiger2018combining, cwdlydw2020gbp} assume to observe all features, and then modify all features, FP assumes to observe only a subset of the features and modifies only the unobserved ones.

\section{Experiments and Discussion} \label{sec:experiments}

\paragraph{Datasets.} We evaluate on the task of node classification on several benchmark datasets: Cora, Citeseer and PubMed~\citep{sen:aimag08}, Amazon-Computers, Amazon-Photo~\citep{Shchur2018PitfallsOG} and OGBN-Arxiv~\citep{hu2020ogb}. To test the scalability of our method, we also test it on OGBN-Products (2,449,029 nodes, 123,718,280 edges).
We report dataset statistics in table \ref{tab:dataset_statistics} (Appendix).

\paragraph{Baselines.} We compare to two strong feature-agnostic baselines: Label Propagation~\citep{zhu2002learning}, which only makes use of the graph structure by propagating labels on the graph, and Graph Positional Encodings~\citep{dwivedi2020benchmarkgnns}, which consist in computing the top $k$ eigenvectors of the Laplacian matrix and treating them as node features in input to a GNN. We additionally compare to feature-imputation methods that are graph-agnostic, such as setting the missing features to $0$ (Zero), a random value from a standard Gaussian (Random), or the global mean of that feature over the graph (Global Mean)~\footnote{If a feature is not observed for any of the node's neighbors, we set it to zero.}. We also compare to a simple graph-based imputation baseline, which sets a missing feature to the mean (of that same feature) over the neighbors of a node (Neighbor Mean). We additionally experiment with MGCNN~\citep{10.5555/3294996.3295127}, a geometric graph completion method which learns how to reconstruct the missing features by making use of the observed features and the graph structure.
For all the above baselines, as well as for our Feature Propagation, we experiment with both GCN~\citep{Kipf:2016tc} and GraphSage with mean aggregator~\citep{10.5555/3294771.3294869} as downstream GNNs. We also compare to recently state-of-the-art methods for learning in the missing features setting (GCNMF~\citep{taguchi2021gcnmf} and PaGNN~\citep{jiang2021incomplete}). For GCNMF we use the publicly available code.\footnote{https://github.com/marblet/GCNmf} We could not find publicly available code for PaGNN so use our own implementation for this comparison.
We do not compare to other commonly used imputation based methods such as VAE~\citep{kingma2013autoencoding} or GAIN~\citep{pmlr-v80-yoon18a}, nor to the Transformer-based method SAT~\citep{sat}, as they have previously been shown to consistently underperform GCNMF and PaGNN~\citep{taguchi2021gcnmf, jiang2021incomplete}.

\begin{figure*}
    \centering
    \includegraphics[width=1\textwidth]{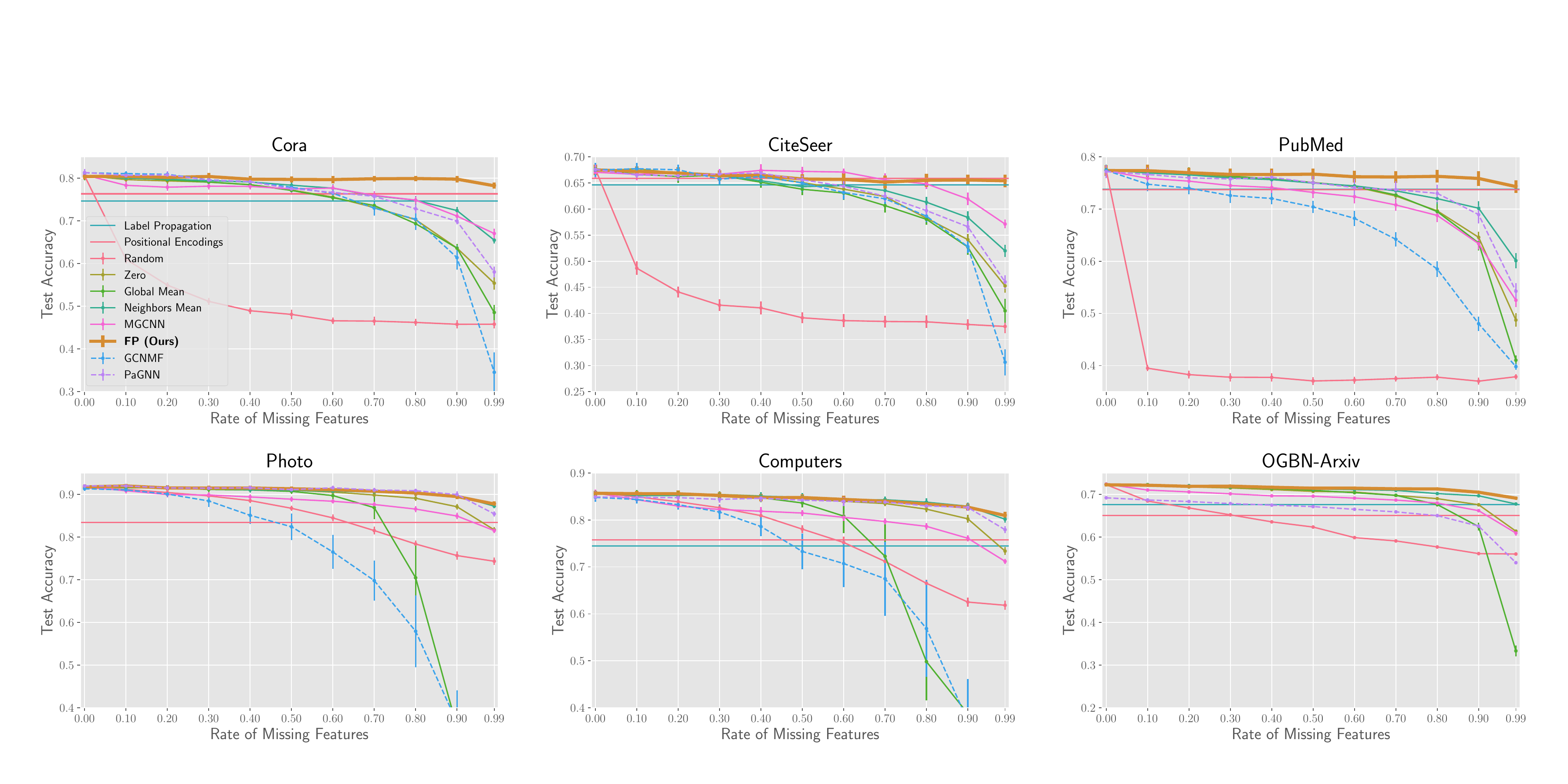}\vspace{-10mm}
    \caption{Test accuracy for varying rate of missing features on six common node-classification benchmarks. For methods that require a downstream GNNs, a 2-layer GCN~\citep{Kipf:2016tc} is used. On OGBN-Arxiv, GCNMF goes out-of-memory and is not reported.}\vspace{-5mm}
    \label{fig:real_world_data_gcn}
\end{figure*}

\paragraph{Experimental Setup.}
We report the mean and standard error of the test accuracy, computed over 10 runs, in all experiments. Each run has a different train/validation/test split (apart from OGBN datasets where we use the provided splits) and mask of missing features\footnote{Each entry of the feature matrix is independently missing with a probability equal to the missing rate.}. The splits are generated at random by assigning 20 nodes per class to the training set, 1500 nodes in total to the validation set and the rest to the test set, similar to \cite{klicpera_diffusion_2019}.
For a fair comparison, we use the same standard hyperparameters for all methods across all experiments. We train using the Adam~\citep{DBLP:journals/corr/KingmaB14} optimizer with a learning rate of $0.005$ for a maximum of $10000$ epochs, combined with early stopping with a patience of $200$. Downstream GNN models (as well as GCNMF and PaGNN) use 2 layers with a hidden dimension of $64$ and a dropout rate of $0.5$ for all datasets, apart from OGBN datasets where 3 layers and a hidden dimension of 256 are used. For OGBN-Arxiv we also employ the Jumping Knowledge scheme~\citep{pmlr-v80-xu18c} with max aggregation. Feature Propagation uses 40 iterations to diffuse the features, as we found this to be enough to reach convergence on all datasets. We want to emphasize that we did not perform any hyperparameter tuning, and FP proved to perform consistently with any reasonable choice of hyperparameters.  We use neighbor sampling~\citep{10.5555/3294771.3294869} when training on OGBN-Products. All experiments are conducted on an AWS p3.16xlarge machine with 8 NVIDIA V100 GPUs with 16GB of memory each, and took around 4 GPU days in total to perform.

\begin{table*}[h!]
    \centering
    \begin{tabular}{lllll}
        \toprule
              Dataset & Full Features &  50.0\% Missing &  90.0\% Missing &  99.0\% Missing \\
        \midrule
                 Cora &        80.39\% & 79.70\%(-0.86\%) & 79.77\%(-0.77\%) & 78.22\%(-2.70\%) \\
             CiteSeer &        67.48\% & 65.74\%(-2.57\%) & 65.57\%(-2.82\%) & 65.40\%(-3.08\%) \\
               PubMed &        77.36\% & 76.68\%(-0.89\%) & 75.85\%(-1.96\%) & 74.29\%(-3.97\%) \\
                Photo &        91.73\% & 91.29\%(-0.48\%) & 89.48\%(-2.46\%) & 87.73\%(-4.36\%) \\
            Computers &        85.65\% & 84.77\%(-1.04\%) & 82.71\%(-3.43\%) & 80.94\%(-5.51\%) \\
           OGBN-Arxiv &        72.22\% & 71.42\%(-1.10\%) & 70.47\%(-2.43\%) & 69.09\%(-4.33\%) \\
        OGBN-Products &        78.70\% & 77.16\%(-1.96\%) & 75.94\%(-3.51\%) & 74.94\%(-4.78\%) \\ \hdashline
              Average &        79.08\% & 78.11\%(-1.27\%) & 77.11\%(-2.48\%) & 75.80\%(-4.10\%) \\
        \bottomrule
    \end{tabular}
    \caption{Performance of Feature Propagation (combined with a GCN model) for 50\%, 90\% and 99\% of missing features, and relative drop compared to the performance of the same model when all features are present. On average, our method loses only 2.50\% of relative accuracy with 90\%  of missing features, and 4.12\%  with 99\%  of missing features.}
\end{table*}

\begin{table*}[h!]
    \centering
    \begin{tabular}{llllll}
    \toprule
          Dataset &          GCNMF &          PaGNN & Label Prop. & Pos. Enc. &               FP (Ours) \\
    \midrule
             Cora & 34.54$\pm$2.07 & 58.03$\pm$0.57 &    74.68$\pm$0.36 &       76.33$\pm$0.26 & \textbf{78.22}$\pm$0.32 \\
         CiteSeer & 30.65$\pm$1.12 & 46.02$\pm$0.58 &    64.60$\pm$0.40 &       \textbf{65.87}$\pm$0.37 & 65.40$\pm$0.54 \\
           PubMed & 39.80$\pm$0.25 & 54.25$\pm$0.70 &    73.81$\pm$0.56 &       73.70$\pm$0.29 & \textbf{74.29}$\pm$0.55 \\
            Photo & 29.64$\pm$2.78 & 85.41$\pm$0.28 &    83.45$\pm$0.94 &       83.45$\pm$0.26 & \textbf{87.73}$\pm$0.27 \\
        Computers & 30.74$\pm$1.95 & 77.91$\pm$0.33 &    74.48$\pm$0.61 &       75.77$\pm$0.47 & \textbf{80.94}$\pm$0.37 \\
       OGBN-Arxiv &            OOM & 53.98$\pm$0.08 &    67.56$\pm$0.00 &       65.08$\pm$0.04 & \textbf{69.09}$\pm$0.06 \\
    OGBN-Products &            OOM &            OOM &    74.42$\pm$0.00 &                  OOM & \textbf{74.94}$\pm$0.07 \\
    \bottomrule
    \end{tabular}
    \caption{Performance of GCNMF, PaGNN and FP(+GCN) with 99\% of features missing, as well as Label Propagation and Positional Encodings (which are feature-agnostic). GCNMF and PaGNN perform respectively 58.33\% and 21.25\% worse in terms of relative accuracy in this scenario compared to when all the features are present. In comparison, FP has only a 4.12\% drop.}
    \label{tab:99_missing}
\end{table*}

\paragraph{Node Classification Results.}
Figure \ref{fig:real_world_data_gcn} shows the results for different rates of missing features (x-axis), when using GCN as a downstream GNN (results with GraphSAGE are reported in Figure~\ref{fig:real_world_data_sage} of the Appendix). FP matches or outperforms other methods in all scenarios. Both GCNMF and PaGNN are consistently outperformed by the simple Neighbor Mean baseline. This is not completely unexpected, as Neighbor Mean can be seen as a first-order approximation of Feature Propagation, where only one step of propagation is performed (and with a slightly different normalization of the diffusion operator). We elaborate on the relation between Neighbor Mean and Feature Propagation as well as on the results of the other baselines in Section \ref{sec:baselines_performance} of the Appendix. Interestingly, most methods perform extremely well up to $50\%$ of missing features, suggesting that in general node features are redundant, as replacing half of them with zeros (\textit{Zero} baseline) has little effect on the performance. The gap between methods opens up from around $60\%$ of missing features, and is particularly large for extremely high rates of missing features ($90\%$ or $99\%$): FP is the only feature-aware method which is robust to these high rates on all datasets (see Table~\ref{tab:99_missing}). Moreover, FP outperforms or matches Label Propagation and Positional Encodings on all datasets, even in the extreme case of $99\%$ missing features. On some datasets, such as Cora, Photo, and Computers, the gap is especially significant. We conclude that reconstructing the missing features using FP is indeed useful for the downstream task.
We highlight the surprising results that, on average, FP with $99\%$ missing features performs only $4.12\%$ worse (in relative accuracy terms) than the same GNN model used with no missing features, compared to $58.33\%$ and $21.25\%$ worse for GCNMF and PaGNN respectively.

\begin{wrapfigure}{r}{0.65\textwidth} \vspace{-13mm}
    \centering
    \includegraphics[trim=100 0 100 0,clip,width=0.65\textwidth]{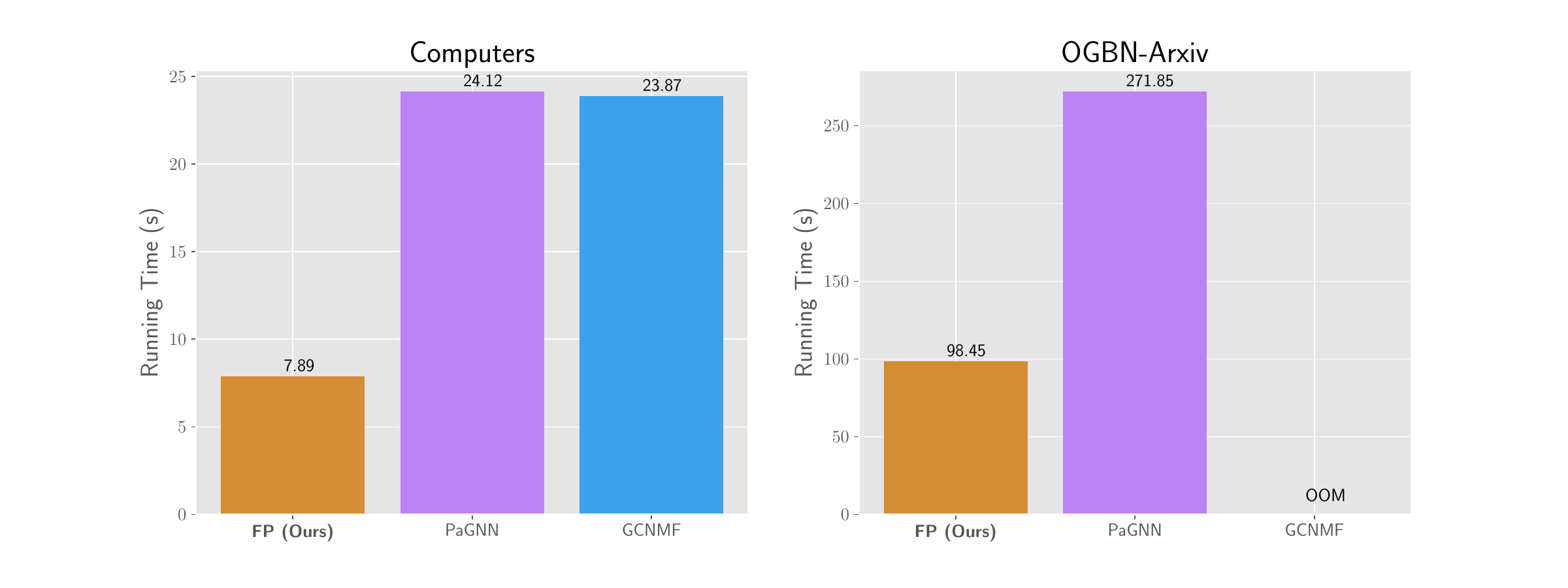}\vspace{-9mm}
    \caption{Run-time (in seconds) of FP, PaGNN and GCNMF. FP is 3x faster than both other methods. GCNMF goes out-of-memory (OOM) on OGBN-Arxiv.\vspace{-5mm} }
    \label{fig:timings}
\end{wrapfigure}

\paragraph{Run-time analysis.} Feature Propagation scales to extremely large graphs, as it only consists of repeated sparse-to-dense matrix multiplications. Moreover, it can be regarded as a pre-processing step, and performed only once, separately from training. In Figure~\ref{fig:timings} we compare the run-time to complete the training of the model for FP, PaGNN and GCNMF. The time for FP includes both the feature propagation step to reconstruct the missing features, as well as training of a downstream GCN model. FP is around 3x faster than PaGNN and GCNMF.  
The propagation step of FP takes only a fraction of the total running time, and the vast majority of the time is spent in training of the donwstream model. The feature propagation step takes only $\sim$0.6s for Computers, $\sim$0.8s for OGBN-Arxiv and $\sim$10.5s for OGBN-Products using a single GPU. Both PaGNN and GCNMF go out-of-memory on OGBN-Products.

\paragraph{When does Feature Propagation work?}
\begin{figure*}
    \centering
    \includegraphics[trim=150 0 150 0,clip,width=1\textwidth]{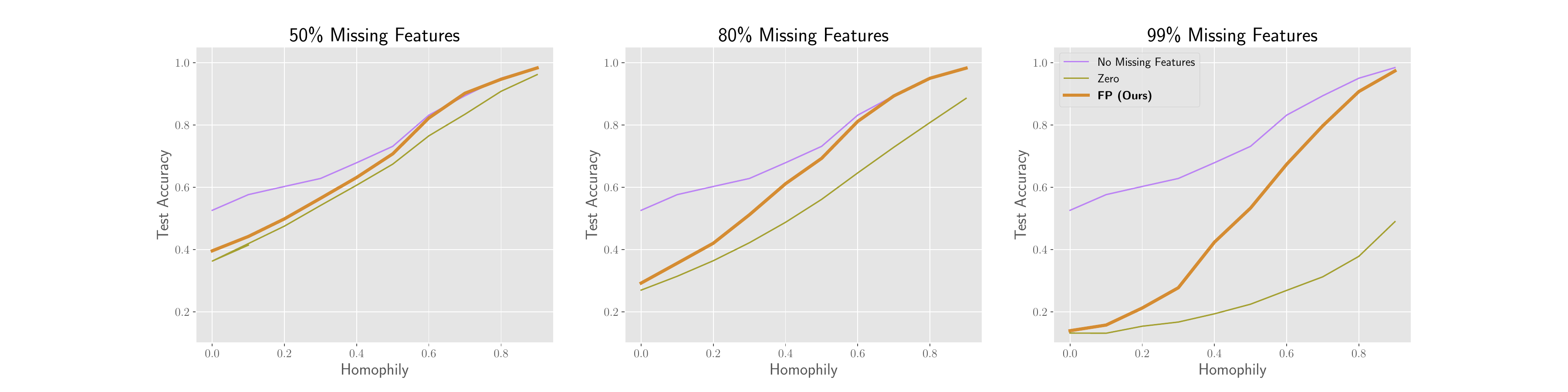}
    \caption{Test accuracy on the synthetic datasets from \cite{mixhop} with different levels of homophily. We use GraphSage as downstream model as it is preferable to GCN on low homophily data~\citep{zhu2020beyond}.}
    \label{fig:synthetic_homophily}
\end{figure*}

Since FP can be interpreted as a low-pass filter that smoothes the features on the graph, we expect it to be suitable in the case of homophilic graph data (where neighbors tend to have similar attributes), and, conversely, to suffer in scenarios of low homophily. To verify this, we experiment on the synthetic dataset from \cite{mixhop}, which consists of 10 graphs with different levels of homophily.  
Figure~\ref{fig:synthetic_homophily} confirms our hypothesis: when the homophily is high, Feature Propagation with $99\%$ of features missing performs similarly to the case when all the features are known. As the homophily decreases, the gap between the two widens to become extremely large in the case of zero homophily. In such scenarios, FP is only slightly better than setting the missing features to zero (Zero baseline). This observation calls for a different kind of non-homogeneous diffusion dependent on the features that can potentially be made learnable for low-homophily data. We leave this as future work.

\section{Conclusion} \label{sec:conclusion}
We have introduced a novel approach for handling missing node features in graph-learning tasks. Our Feature Propagation model can be directly derived from energy minimization, and can be implemented as an efficient iterative algorithm where the features are multiplied by a diffusion matrix, before resetting the known features to their original value. Experiments on a number of datasets suggest that FP can reconstruct the missing features in a way that is useful for the downstream task, even when $99\%$ of the features are missing. FP outperforms recently proposed methods by a significant margin on common benchmarks, while also being extremely scalable. 

\paragraph{Limitations.} While our method is designed for homophilic graphs, a more general learnable diffusion could be adopted to perform well in low homophily scenarios, as discussed in Section~\ref{sec:experiments}. Feature Propagation is designed for graphs with only one node and edge type, however it could be extended to heterogenous graphs by having separate diffusions for different types of edges and nodes. Finally, Feature Propagation treats feature channels independently. To account for dependencies, diffusion with channel mixing should be used. 

\paragraph{Societal Impact.} Our work is aimed at improving the performance of Graph Neural Networks. While we believe that nothing in our work raises specific ethical concerns, the recent broad adoption of GNNs in industrial applications opens the possibility to the misuse of such methods with potentially detrimental societal impact. 

\bibliography{neurips_2022}
\bibliographystyle{abbrv}

\appendix

\section{Appendix}\label{appendix:proofs}

\subsection{Closed-Form Solution for Harmonic Interpolation} \label{sec:dirichlet_closed_form}
Given the {\em Dirichlet energy} $\ell(\mathbf{x},G) = \tfrac{1}{2}\mathbf{x}^\top \boldsymbol{\Delta}\mathbf{x}$, we want to solve for missing features $\mathbf{x}_u = argmin_{\mathbf{x}_u} \ell$, leading to the optimality condition $\nabla_{\mathbf{x}_u} \ell = \mathbf{0}$. From Eq.~\ref{eq:diffeq} we find $\nabla_{\mathbf{x_u}} \ell = \mathbf{0}$ to be the solution of $\mathbf{\Delta}_{uk} \mathbf{x}_k + \mathbf{\Delta}_{uu} \mathbf{x}_u = \mathbf{0}
$. The unique solution to this system of linear equations is $\mathbf{x}_u = -\mathbf{\Delta}_{uu}^{-1} \mathbf{\Delta}_{uk} \mathbf{x}_k$. 
We show this solution always exists by proving $\mathbf{\Delta}_{uu}$ is non-singular (Proposition~\ref{prop:laplacianinvertable}). The proof of this result follows from the following Lemma.

\begin{lemma}
\label{lemma:spectralradius}
Take any undirected and connected graph with adjacency matrix $\mathbf{A} \in \{0, 1\}^{n \times n}$, and normalised Adjacency $\tilde{\mathbf{A}} =  \mathbf{D}^{-1/2} \mathbf{A} \mathbf{D}^{-1/2}$, with $\mathbf{D}$ being the degree matrix of $A$. Let $\tilde{\mathbf{A}}_{uu}$ be the bottom right submatrix of $\tilde{\mathbf{A}}$ where $1 \leq b < n$. Then $\rho(\tilde{\mathbf{A}}_{uu}) < 1$ where $\rho(\cdot)$ denotes spectral radius.
\end{lemma}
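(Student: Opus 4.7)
The plan is to exploit two classical spectral facts about the normalized adjacency of an undirected connected graph and then use a "padding-by-zero" Rayleigh quotient argument to rule out $\pm 1$ from the spectrum of the principal submatrix.

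First I would record the tools. Since $\mathbf{A}$ is symmetric and $\mathbf{D}^{-1/2}$ diagonal, $\tilde{\mathbf{A}}$ is a real symmetric matrix, and a standard calculation (e.g.\ observing $\mathbf{I}-\tilde{\mathbf{A}}=\boldsymbol{\Delta}\succeq \mathbf{0}$ and $\mathbf{I}+\tilde{\mathbf{A}}\succeq \mathbf{0}$, which follows from nonnegativity of $\sum_{ij}a_{ij}(x_i/\sqrt{d_i}\pm x_j/\sqrt{d_j})^2$) gives that the spectrum of $\tilde{\mathbf{A}}$ lies in $[-1,1]$. Moreover, connectedness implies that the eigenvalue $1$ is simple and its eigenspace is spanned by $\mathbf{D}^{1/2}\mathbf{1}$, which has strictly positive entries (since the graph has no isolated vertices). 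Likewise, $-1$ is an eigenvalue only if the graph is bipartite, in which case the eigenspace is again one-dimensional and spanned by a vector that is $+\mathbf{D}^{1/2}\mathbf{1}$ on one part and $-\mathbf{D}^{1/2}\mathbf{1}$ on the other, hence also nowhere zero.

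Next I would argue by contradiction. Suppose $\rho(\tilde{\mathbf{A}}_{uu})\geq 1$. Because $\tilde{\mathbf{A}}_{uu}$ is symmetric, this forces either $1$ or $-1$ to be an eigenvalue. In the case $\lambda=1$, let $\mathbf{v}\in\mathbb{R}^{b}$ be a unit eigenvector of $\tilde{\mathbf{A}}_{uu}$ and pad it by zeros on the known indices to form $\mathbf{w}=\bigl[\mathbf{0}_{k}^{\top},\mathbf{v}^{\top}\bigr]^{\top}\in\mathbb{R}^{n}$. A direct block computation gives
\[
\mathbf{w}^{\top}\tilde{\mathbf{A}}\mathbf{w}=\mathbf{v}^{\top}\tilde{\mathbf{A}}_{uu}\mathbf{v}=\|\mathbf{v}\|^{2}=\|\mathbf{w}\|^{2}.
\]
By the Rayleigh quotient characterization of the largest eigenvalue of a symmetric matrix, $\mathbf{w}$ must itself be a top eigenvector of $\tilde{\mathbf{A}}$ with eigenvalue $1$. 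But the top eigenspace is spanned by $\mathbf{D}^{1/2}\mathbf{1}$, which has no zero entries, whereas $\mathbf{w}$ is zero on every $k$-index (and $V_{k}\neq\emptyset$ because $b<n$). This is a contradiction. The case $\lambda=-1$ is identical with "smallest eigenvalue" in place of "largest", relying on the fact that the $-1$-eigenvector (when it exists) is also nowhere zero.

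The main obstacle, and really the only delicate point, is ensuring the nowhere-zero property of the extremal eigenvectors of $\tilde{\mathbf{A}}$; this is where connectedness (and, for the $-1$ case, the structure of bipartite graphs) is used in an essential way. Everything else is bookkeeping: symmetry of $\tilde{\mathbf{A}}$, the block identity for $\mathbf{w}^{\top}\tilde{\mathbf{A}}\mathbf{w}$, and the Rayleigh-quotient characterization of extremal eigenvalues of symmetric matrices. Once $\rho(\tilde{\mathbf{A}}_{uu})<1$ is established, Proposition~\ref{prop:laplacianinvertable} follows immediately since $\boldsymbol{\Delta}_{uu}=\mathbf{I}-\tilde{\mathbf{A}}_{uu}$ has all eigenvalues in $(0,2]$ and is therefore invertible.
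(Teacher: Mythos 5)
Your proof is correct, but it follows a genuinely different route from the paper's. The paper pads $\tilde{\mathbf{A}}_{uu}$ with zeros to form an $n\times n$ matrix $\tilde{\mathbf{A}}_{up}$, observes that $\tilde{\mathbf{A}}_{up}\leq\tilde{\mathbf{A}}$ elementwise with $\tilde{\mathbf{A}}_{up}\neq\tilde{\mathbf{A}}$ and that $\tilde{\mathbf{A}}$ is irreducible (connectedness), and then invokes the Perron--Frobenius comparison theorem for nonnegative matrices \citep[Corollary 2.1.5]{berman1994nonnegative} to get $\rho(\tilde{\mathbf{A}}_{uu})=\rho(\tilde{\mathbf{A}}_{up})<\rho(\tilde{\mathbf{A}})\leq 1$ in one stroke. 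You instead exploit symmetry: a Rayleigh-quotient argument on the zero-padded eigenvector, combined with the fact that the extremal eigenvectors of $\tilde{\mathbf{A}}$ for a connected graph (the Perron vector $\mathbf{D}^{1/2}\mathbf{1}$ at $+1$, and the signed bipartite vector at $-1$ when it exists) are nowhere zero. The trade-offs: the paper's nonnegative-matrix argument bounds the spectral radius directly and so never has to distinguish the $+1$ and $-1$ cases, and it does not use symmetry at all; your argument is more elementary and self-contained (no specialized comparison theorem, just interlacing/Rayleigh quotients and standard spectral graph theory), at the cost of a separate case analysis for the bipartite $-1$ eigenvalue. Two minor points to tighten in a full write-up: state explicitly that the eigenvalues of the principal submatrix $\tilde{\mathbf{A}}_{uu}$ lie in $[-1,1]$ (Cauchy interlacing, or your own Rayleigh-quotient identity), so that $\rho(\tilde{\mathbf{A}}_{uu})\geq 1$ really does force $\pm 1$ to be an eigenvalue rather than something of larger modulus; and justify the equality case of the Rayleigh bound (attaining $\lambda_{\max}$ or $\lambda_{\min}$ forces membership in the corresponding eigenspace), which is standard but is the hinge of your contradiction.
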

\begin{proof}
Define 
\begin{equation*}
    \tilde{\mathbf{A}}_{up} = \begin{bmatrix}
    \mathbf{0}_u & \mathbf{0}_{uk} \\ 
    \mathbf{0}_{ku} & \tilde{\mathbf{A}}_{uu}
    \end{bmatrix},
\end{equation*}
to be the matrix equal to $\tilde{\mathbf{A}}_{uu}$ in the lower right $b \times b$ sub-matrix and padded with zero entries elsewhere. Clearly $\tilde{\mathbf{A}}_{up} \leq \tilde{\mathbf{A}}$ elementwise and $\tilde{\mathbf{A}}_{up} \not = \tilde{\mathbf{A}}$. Furthermore, $\tilde{\mathbf{A}}_{up} + \tilde{\mathbf{A}}$ represents an adjacency matrix of some strongly connected graph and is therefore irreducible \citep[Theorem 2.2.7]{berman1994nonnegative}. These observations allow us to deduce that $\rho(\tilde{\mathbf{A}}_{up}) < \rho(\tilde{\mathbf{A}})$ \citep[Corollary 2.1.5]{berman1994nonnegative}. Note that $\rho(\tilde{\mathbf{A}}_{up})=\rho(\tilde{\mathbf{A}}_{uu})$ as $\tilde{\mathbf{A}}_{up}$ and $\tilde{\mathbf{A}}_{uu}$ share the same non-zero eigenvalues. Furthermore, $\rho(\tilde{\mathbf{A}}) \leq 1$ as we can write $\tilde{\mathbf{A}} = \mathbf{I} - \mathbf{\Delta}$ and $\mathbf{\Delta}$ is known to have eigenvalues in the range $[0, 2]$~\citep{chung1997spectral}. Combining these inequalities gives the result $\rho(\tilde{\mathbf{A}}_{uu}) = \rho(\tilde{\mathbf{A}}_{up}) < \rho(\tilde{\mathbf{A}}) \leq 1$.
\end{proof}
\begin{proposition}[The sub-Laplacian matrix of a undirected connected graph is invertible]
Take any undirected, connected graph with adjacency matrix $\mathbf{A} \in \{0, 1\}^{n \times n}$, and its Laplacian $\mathbf{\Delta} = \mathbf{I} - \mathbf{D}^{-1/2} \mathbf{A} \mathbf{D}^{-1/2}$, with $\mathbf{D}$ being the degree matrix of $\mathbf{A}$. Then, for any principle sub-matrix $\mathbf{L}_u \in \mathbb{R}^{b \times b}$ of the Laplacian, where $1 \leq b < n$, $L_u$ is invertible. 
\end{proposition}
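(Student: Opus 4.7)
The plan is to reduce the statement to Lemma~\ref{lemma:spectralradius} via a node relabeling. Given an arbitrary principal submatrix $\mathbf{L}_u$ of $\mathbf{\Delta}$, indexed by some subset $S \subset V$ with $|S|=b$ and $1 \le b < n$, I would first observe that invertibility is preserved under simultaneous row and column permutations. So I reorder the vertices so that $S$ is placed last, writing this as conjugation of $\mathbf{\Delta}$ by a permutation matrix $\mathbf{P}$. Since $\mathbf{P} \mathbf{\Delta} \mathbf{P}^\top$ is again the normalized Laplacian of the same (connected, undirected) graph under the new labeling, the bottom-right $b \times b$ block of $\mathbf{P}\mathbf{\Delta}\mathbf{P}^\top$ is precisely $\mathbf{L}_u$, and the bottom-right $b \times b$ block of $\mathbf{P}\tilde{\mathbf{A}}\mathbf{P}^\top$ is the $\tilde{\mathbf{A}}_{uu}$ to which Lemma~\ref{lemma:spectralradius} applies.

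Second, I would use the block identity $\mathbf{\Delta} = \mathbf{I} - \tilde{\mathbf{A}}$, restricted to this bottom-right block, to write $\mathbf{L}_u = \mathbf{I}_b - \tilde{\mathbf{A}}_{uu}$. By Lemma~\ref{lemma:spectralradius}, every eigenvalue $\mu$ of $\tilde{\mathbf{A}}_{uu}$ satisfies $|\mu| \le \rho(\tilde{\mathbf{A}}_{uu}) < 1$, so in particular $\mu \ne 1$. The eigenvalues of $\mathbf{L}_u$ are exactly $1 - \mu$ as $\mu$ ranges over the spectrum of $\tilde{\mathbf{A}}_{uu}$, so none of them is zero. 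Therefore $\det \mathbf{L}_u \ne 0$, i.e., $\mathbf{L}_u$ is invertible.

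The main thing to be careful about is the reduction step: Lemma~\ref{lemma:spectralradius} is stated only for the bottom-right block of $\tilde{\mathbf{A}}$, while the proposition asks about an arbitrary principal submatrix of $\mathbf{\Delta}$. But since relabeling the vertices of an undirected connected graph yields another undirected connected graph with the normalized Laplacian conjugated by a permutation, this step is routine and the rest is immediate. There is no real obstacle beyond making this reduction explicit.
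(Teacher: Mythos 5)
Your proof is correct and follows essentially the same route as the paper's: both reduce invertibility of $\mathbf{L}_u = \mathbf{I} - \tilde{\mathbf{A}}_{uu}$ to the statement that $1$ is not an eigenvalue of $\tilde{\mathbf{A}}_{uu}$, which follows from the spectral radius bound in Lemma~\ref{lemma:spectralradius}. The only difference is that you make the permutation/relabeling reduction explicit, whereas the paper handles it implicitly by assuming from the outset that the nodes are ordered so that the relevant block sits in the bottom-right corner.
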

\begin{proof}
To prove $\mathbf{\Delta}_{uu}$ is non-singular it is enough to show $0$ is not an eigenvalue. Note that $\mathbf{\Delta}_{uu} = \mathbf{I} - \tilde{\mathbf{A}}_{uu}$ so $0$ is not an eigenvalue if and only if $\tilde{\mathbf{A}}_{uu}$ does not have an eigenvalue equal to $1$, which follows from Lemma~\ref{lemma:spectralradius}.
\end{proof}

\subsection{Closed-Form Solution for the Euler scheme} \label{sec:closed_form_solution}

\begin{proposition}
Take any undirected and connected graph with adjacency matrix $\mathbf{A} \in \{0, 1\}^{n \times n}$, and normalised Adjacency $\tilde{\mathbf{A}} =  \mathbf{D}^{-1/2} \mathbf{A} \mathbf{D}^{-1/2}$, with $\mathbf{D}$ being the degree matrix of $\mathbf{A}$. Let $\mathbf{x} = \mathbf{x}^{(0)} \in \mathbf{R}^{n}$ be the initial feature vector and define the following recursive relation
\begin{eqnarray*}
\mathbf{x}^{(k)} &=& 
\left[ \begin{matrix}
 \mathbf{I} & \mathbf{0}\\
 \tilde{\mathbf{A}}_{uk} &  \tilde{\mathbf{A}}_{uu}
\end{matrix}
\right]
\mathbf{x}^{(k-1)}.
\end{eqnarray*} 
Then this recursion converges and the steady state is given to be 
\begin{equation*}
\lim_{n \rightarrow \infty} \mathbf{x}^{(n)} = 
\begin{bmatrix}
\mathbf{x}_k \\
-\mathbf{\Delta}_{kk}^{-1} \tilde{\mathbf{A}}_{uk} \mathbf{x}_k
\end{bmatrix}.
\end{equation*}
\end{proposition}
\begin{proof}
The recursive relation can be written in the following form
\begin{eqnarray*}
\begin{bmatrix}
\mathbf{x}^{(k)}_k \\ 
\mathbf{x}^{(k)}_u 
\end{bmatrix}
&=& 
\left[ \begin{matrix}
 \mathbf{I}_{l} & \mathbf{0}_{ku}\\
\tilde{\mathbf{A}}_{uk} & \tilde{\mathbf{A}}_{uu}
\end{matrix}
\right]
\begin{bmatrix}
\mathbf{x}^{(k-1)}_k \\ 
\mathbf{x}^{(k-1)}_u 
\end{bmatrix} = 
\begin{bmatrix}
\mathbf{x}^{(k-1)}_k \\ 
\tilde{\mathbf{A}}_{uk} \mathbf{x}^{(k-1)}_k + \tilde{\mathbf{A}}_{uu} \mathbf{x}^{(k-1)}_u 
\end{bmatrix}.
\end{eqnarray*}
The first $l$ rows remain the same so we can write $\mathbf{x}_k^{(k)} = \mathbf{x}_k^{(k-1)} =\mathbf{x}_k$ and consider just the convergence of the last $u$ rows
\begin{equation*}
\mathbf{x}^{(k-1)}_u = \tilde{\mathbf{A}}_{uk} \mathbf{x}_k + \tilde{\mathbf{A}}_{uu} \mathbf{x}^{(k-1)}_u.
\end{equation*}
We can look at the stationary behaviour by unrolling this recursion and taking the limit to find stationary state 
\begin{equation*}
   \lim_{n \rightarrow \infty} \mathbf{x}^{(n)}_u = \lim_{n \rightarrow \infty}  \tilde{\mathbf{A}}_{uu}^n \mathbf{x}^{(0)}_u + \left(\sum_{i=1}^n \tilde{\mathbf{A}}_{uu}^{(i-1)} \right) \tilde{\mathbf{A}}_{uk} \mathbf{x}_k.
\end{equation*}

Using Lemma~\ref{lemma:spectralradius} we find $\lim_{n \rightarrow \infty}  \tilde{\mathbf{A}}_{uu}^n \mathbf{x}^{(0)}_u = \mathbf{0}$ and the geometric series converges giving us the following limit 
\begin{equation*}
   \lim_{n \rightarrow \infty} \mathbf{x}_u^{(n)} = \left( \mathbf{I}_u - \tilde{\mathbf{A}}_{uu} \right)^{-1} \tilde{\mathbf{A}}_{uk} \mathbf{x}_k = -\mathbf{\Delta}_{kk}^{-1} \tilde{\mathbf{A}}_{uk} \mathbf{x}_k.
\end{equation*}
\end{proof}

\begin{table}
    \centering
    \begin{tabular}{lrrrr}
        \toprule
            Dataset &      Nodes &        Edges &    Features &   Classes \\
        \midrule
               Cora &      2,485 &        5,069 &       1,433 &         7 \\
           CiteSeer &      2,120 &        3,679 &       3,703 &         6 \\
             PubMed &     19,717 &       44,324 &         500 &         3 \\
              Photo &      7,487 &      119,043 &         745 &         8 \\
          Computers &     13,381 &      245,778 &         767 &        10 \\
         OGBN-Arxiv &    169,343 &    1,166,243 &         128 &        40 \\
      OGBN-Products &  2,449,029 &  123,718,280 &         100 &        47 \\
        \bottomrule
    \end{tabular}
    \caption{Dataset statistics.}
    \label{tab:dataset_statistics}
\end{table}

\begin{figure}
    \centering
    \includegraphics[width=1\textwidth]{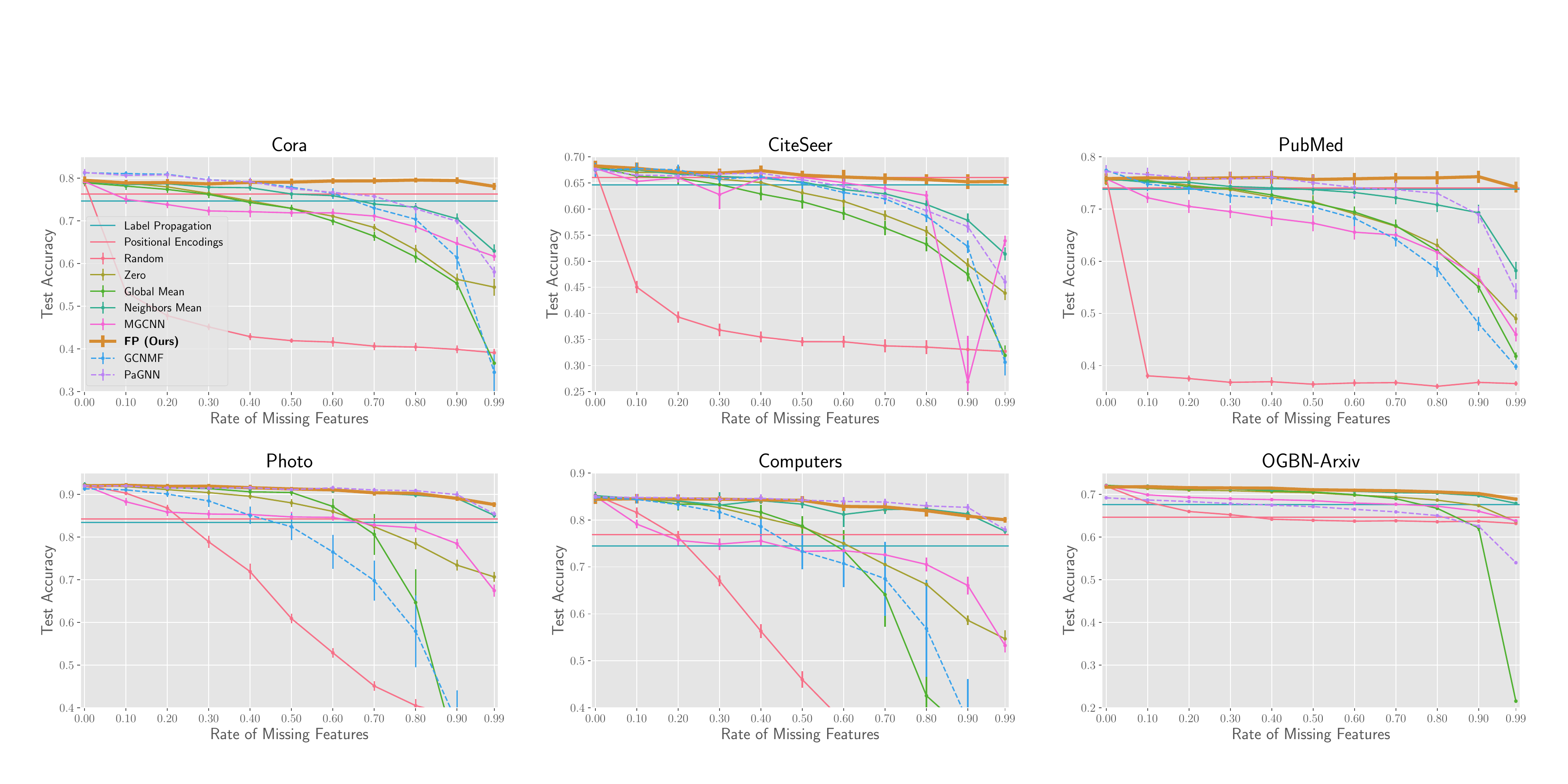}\vspace{-10mm}
    \caption{Test accuracy for varying rate of missing features on six common node-classification benchmarks. For methods that require a downstream GNNs, a 2-layer GraphSAGE~\citep{10.5555/3294771.3294869} is used. On OGBN-Arxiv, GCNMF goes out-of-memory and is not reported.}
    \label{fig:real_world_data_sage}
\end{figure}

\subsection{Baselines' Implementation and Tuning} \label{sec:baseline_setup}
\paragraph{Label Propagation} We use the label propagation implementation provided in Pytorch-Geometric~\citep{Fey/Lenssen/2019}. Since the method is quite sensitive to the value of the $\alpha$ hyperparameter, we perform a gridsearch separately on each dataset over the following values: $[0.1, 0.2, 0.3, 0.4, 0.5, 0.6, 0.7, 0.8, 0.9, 0.95, 0.99]$.

\paragraph{Positional Encodings} We compute the laplacian eigenvectors using SciPy~\citep{2020SciPy-NMeth} sparse eigenvectors routines. We use the top twenty eigenvectors as positional encodings. 

\paragraph{MGCNN} We re-implement MGCNN~\citep{10.5555/3294996.3295127} in Pytorch by taking inspiration from the authors' public TensorFlow code~\footnote{https://github.com/fmonti/mgcnn}. For simplicity, we use the version of the model with only graph convolutional layers and without an LSTM. For the matrix completion training process, we split the observed features into 50\% input data, 40\% training targets and 10\% validation data.
Once the MGCNN model is trained, we feed it the matrix with all the observed features to predict the whole feature matrix. This reconstructed features matrix is then used as input for a downstream GNN (as for the feature-imputation baselines).

\subsection{Discussion Over Baselines' Performance} \label{sec:baselines_performance}

\paragraph{Neighborhood Averaging} 
As for some intuition to why the simple Neighborhood Averaging performs competitively, let us assume to have a single feature channel and this feature to be homophilous over the graph. When a node has enough neighbors, the average of their features is a good estimate for the feature of the given node. However, as the rate of missing features increases, the feature may be present for only a few neighbors (or none at all), causing the estimate to have a higher variance. On the other hand, Feature Propagation allows information to travel longer distances in the graph by repeatedly multiplying by the diffusion matrix. Even if we do not observe the feature for any of a node’s neighbors, it is still possible to estimate it from nodes further away in the graph. This can be observed empirically: the gap between Neighborhood Averaging and Feature Propagation becomes increasingly significant for higher rates of missing features.   

\paragraph{Zero vs Random}
In models such as GCN and GraphSage, where node embeddings are computed as (weighted) average of neighbors embeddings, the effect of the Zero baseline is simply to reduce the norm of the average embeddings of all nodes (since all nodes have the same expected proportion of neighbors with missing features). On the other hand, the Random baseline corrupts this weighted average. More generally, while for a GNN model it could be relatively easy to learn to ignore features set to zero, and only focus on known (non-zero) features, it would be basically impossible for the model to do the same when setting the missing features to a random value.

However, we find Random to perform better than Zero when all features are missing. This is in line with findings in the literature~\citep{DBLP:conf/sdm/SatoYK21, abboud2021the}, where Random features have been shown to work well in conjunction with GNNs as they act as signatures for the nodes. On the other hand, if all nodes have all zero vectors, it becomes basically impossible to distinguish them. After applying a GNN, all nodes will still have very similar embeddings and the task performance will be close to a random guess.

\end{document}